
\documentclass[runningheads]{llncs}
\usepackage{graphicx}
\usepackage{amsmath,amssymb} 
\usepackage{color}
\usepackage{bbm} 
\usepackage{algpseudocode}
\begin{document}
\pagestyle{headings}
\mainmatter
\def\ECCV12SubNumber{1220}  

\title{Fast Planar Correlation Clustering for Image Segmentation} 

\titlerunning{Fast Planar Correlation Clustering}
\authorrunning{Yarkony, Ihler, Fowlkes}

\author{Julian Yarkony, Alexander Ihler, Charless C. Fowlkes\\
{\tt\{jyarkony,ihler,fowlkes\}@ics.uci.edu}}
\institute{Department of Computer Science, University of California, Irvine}

\maketitle

\begin{abstract}
We describe a new optimization scheme for finding high-quality clusterings in
planar graphs that uses weighted perfect matching as a subroutine. Our method provides
lower-bounds on the energy of the optimal correlation clustering that are
typically fast to compute and tight in practice. We demonstrate our algorithm
on the problem of image segmentation where this approach outperforms existing
global optimization techniques in minimizing the objective and is competitive
with the state of the art in producing high-quality segmentations.
\footnote{This is the extended version of a paper to appear at the 12th European Conference on Computer Vision (ECCV 2012)}
\end{abstract}

\section{Introduction}

We tackle the problem of generic image segmentation where the goal is to
partition the pixels of an image into sets corresponding to objects and 
surfaces in a scene. Cues for this task can come from both bottom-up (e.g.,
local edge contrast) and top-down (e.g., recognition of familiar objects). For
closed domains where top-down information is available, this problem can be
phrased in terms of labeling each pixel with one of several category labels or
perhaps ``background''.  There is a rapidly developing body of research in this 
area that integrates multiple cues such as the output of a bank of object
detectors into a single model, typically formulated as a Markov random
field over the pixel labels and some additional hidden variables~\cite{Boykov:PAMI01,BorensteinUllman:TPAMI08,winn2006layout,levin2009learning,Yang11,Gould,Ladicky}.

When top-down information is not available, it may still be quite valuable to
estimate image segments. Bottom-up segmentations provide candidate support for
novel objects and can simplify the processing of the scene to the problem of
understanding a small number of salient regions. Without a predefined set of
labels, it is natural to describe the segmentation task as a graph partitioning
problem in which pixels or superpixels have pairwise or higher order
similarities and the number of parts must be estimated.  There is a rich
history of applying graph partitioning techniques to image segmentation (e.g.,
\cite{N-Cuts,Wang:etc,ZhuSongShi:ICCV2007,Sharon_al:Nature06,cour:ncuts}). 

Here we consider the weighted correlation clustering objective which sums up
the edges cut by a proposed partitioning of the graph. Edges may have both
positive and negative weights. Correlation clustering is appealing since the
optimal number of segments emerges naturally as a function of the edge
weights rather than requiring an additional search over some model order
parameter.  Further, because the objective is linear in the edge weights, the
problem of learning can be approached using techniques from structured
prediction~\cite{Taskar}.

As with many non-trivial graph partitioning criteria, finding a minimum weight
correlation clustering is NP-hard for general
graphs~\cite{BansalBlumChawla}. Demaine et al. \cite{Demaine} provide results on
the hardness of approximation in general graphs by reduction to/from multiway
cut~\cite{Dahlhaus}. Recently, Bachrach et al. \cite{Bachrach} also showed that
correlation clustering is NP-hard in planar graphs by a reduction from planar
independent set.

Despite these difficulties, correlation clustering has seen a few recent
applications to the image segmentation problem.  Andres et al. \cite{Andres11}
define a model for image segmentation that scores segmentations based on the
sum of costs associated with each edge in the segmentation and optimize it
using an integer linear programming (ILP) branch-and-cut strategy. Kim et al.
\cite{Kim11} use a correlation clustering model for segmentation which includes
higher-order potentials or hyper-edges that define cost over sets of nodes
which they solve using linear programming (LP) relaxation techniques.

In this paper, we describe a new optimization strategy that specifically
exploits the planar structure of the image graph.  Our approach uses weighted
perfect matching to find candidate cuts in re-weighted versions of the
original graph and then combines these cuts into a final clustering. The
collection of cuts form constraints in a linear program 
that lower-bounds the energy of the true correlation clustering.  In practice
this lower-bound and the cost of the output clustering are almost always
equal, yielding a certificate of global optimality.  We compare this new
optimization scheme to existing approaches based on both standard LP relaxations
and ILP and find that our approach is substantially faster and provides tighter 
lower-bounds for a wide range of image segmentation problems.

\section{Correlation Clustering}

Correlation clustering is a clustering criteria based on pairwise
(dis)similarities.  Let $G = (V,E)$ be an undirected graph with edge
weights $\theta_e \in \mathbb{R}$ that specify the similarity or
dissimilarity on an edge $e = (i,j)$  between vertices $i$ and $j$.
Correlation clustering seeks a clustering of the vertices into disjoint sets
$V = V_1 \sqcup V_2 \sqcup V_3 \ldots$ that minimizes the total weight of edges
between clusters.  \footnote{This objective is equivalent (up to a constant)
with the minimum-disagreement or maximum-agreement objectives mentioned in the
literature~\cite{BansalBlumChawla,Demaine}.}

Let $X_e$ be a binary indicator variable specifying which edges are ``cut'' by
the partitioning.  $X_e = 0$ if edge $e = (u,v)$ is within a cluster (i.e.,
$u,v \in V_i$) and $X_e = 1$ if $e$ runs between two clusters (i.e., $u \in
V_i$, $v \in V_j$, $i \neq j$).  Let $\mathcal{C}$ indicate the configurations
of $X$ that correspond to valid partitionings of the vertices. We can describe
this succinctly by the set of triangle inequalities
\[
\mathcal{C} = \{X : X_{u,w} + X_{w,v} \geq X_{u,v} \quad \forall u,v,w \in V \}
\]
These constraints enforce transitivity of the clustering; if edge $(u,w)$ is
cut, then at least one of $(u,w)$ and $(w,v)$ must also be cut.

We can express the correlation clustering problem as:
\[
   CC^\star = \min_{X \in \mathcal{C}} \sum_{e \in E} \theta_e X_e
\]
and refer to $CC^\star$ as the cost or the energy of the optimal clustering.
Where appropriate, we will use $CC^\star(\theta)$ to indicate the dependence
of this optimum on the parameters.

Unlike other graph partitioning objectives (min-cut, normalized-cut, etc.) the
edge weights can be both positive and negative.  Furthermore, we do not specify
the number of segments {\it a priori} or place any constraints on their size.  Instead,
these arise naturally from the edge weights.  For example, if all the edge weights
are negative, each vertex will be placed in a separate cluster.  If all
the edge weights are positive, the optimal solution is to place all vertices in
a single cluster.  This means that $CC^\star$ is upper-bounded by 0 since
placing all the vertices in the same cluster is a valid partitioning with cost
0.

The correlation clustering objective appears very similar to standard pairwise
Markov Random Field (MRF) models for image labeling.  For
example, if we knew the optimal solution consisted of k clusters, we could
convert the problem into a $k$-state MRF without any unary terms. In the next
section we make this connection precise.

%
%

\section{Clusterings and Colorings}

Consider a partitioning of the graph represented by $X \in \mathcal{C}$.
We call this partitioning {\em $k$-colorable} if there is some labeling $L:V
\rightarrow {1,2,\ldots,k}$ of the vertices of the graph so that $X_{uv} = 1
\Leftrightarrow L(u) \neq L(v)$.  For every graph, there is a minimal number of colors
$\gamma(G)$, known as the {\em chromatic number} of $G$, that is sufficient to
represent all partitions.  Let $\mathcal{C}_k$ be the set of partitionings that
are representable by $k$ colors, then $\mathcal{C}_1 \subset \mathcal{C}_2
\subset \ldots \subset \mathcal{C}_{\gamma(G)} = \mathcal{C}$.  For example,
the four-color theorem \cite{AppelHaken} shows that any partition of a planar
graph can be represented by $k=4$ labels so $\mathcal{C} = \mathcal{C}_4$
for planar graphs.

This provides a useful alternative formulation of correlation clustering in
terms of vertex labels.  Let $L_v \in {1,\ldots,k}$ be a label variable for
vertex $v$.  Then we can define an equivalent optimization problem
\[
  CC_k^\star = \min_{L \in \{1,\ldots,k\}^N} \sum_{(u,v) \in E} \theta_{u,v} [L_u \neq L_v]
\]
To produce a partitioning from the labeling, simply take the collection of
connected components for the subgraph induced by each label in turn.  If $G$ is
$k$-colorable, then $CC_q^\star = CC^\star$ for any $q \geq k$. In general
$CC_q^\star \geq CC^\star$ for $q < k$ since the optimal partitioning of $G$ may
not be $q$-colorable.  The set of 2-colorable partitions are commonly referred
to as cuts of a graph.

Since planar graphs are 4-colorable, $CC^\star = CC_4^\star$.  We could tackle
the problem planar correlation clustering using the standard set of tools for
optimizing a 4-state MRF with mixed (attractive/repulsive) potentials. Since 
the combinatorial optimization is NP hard, such methods can only give approximate
solutions. Furthermore, many of them perform poorly on problems with no unary
potentials.  Since the energy function is symmetric with respect to
permutations of the labels, the true max-marginals for the node labels are
uninformative and one is forced to look at higher-order constraints.
For example, the lower-bound provided by TRW \cite{Wainwright05,Kolmogorov06}
is simply the sum of the negative edge weights in the graph.

One interesting exception is the case of planar binary labeling problems.
For planar graphs, the cost of the optimal binary labeling
$CC_2^\star$ can be computed by an efficient reduction to weighted perfect
matching in a suitably augmented planar dual of the graph $G$.  This idea was
first described in the statistical physics literature by Kasteleyn \cite{Kasteleyn2} and Fisher \cite{Fisher2} in the context of
computing the partition function of an Ising model. Recently,
this has been explored as a tool for finding MAP configurations of more general
MRFs that include an external
field~\cite{Globerson,SchraudolphNIPS,RPM,Batra,Yarkony11a}.

Since 2-colorable partitions are a subset of 4-colorable partitions, finding
the optimal 2-colorable partitioning does not necessarily give us the optimal
clustering of a planar graph. The space of 4-colorable partitions is larger
so in general $CC_2^\star \geq CC_4^\star$. However, the optimal 2-coloring still
provides some useful information about the optimal 4-colorable partition. 

\begin{proposition}
For any graph, $0 \geq CC_2^\star \geq {CC}_4^\star \geq \frac{3}{2} {CC}_2^\star$
so the cost of the optimal planar correlation clustering is bounded below by
$\frac{3}{2}$ the cost of the optimal 2-colorable partition.
\end{proposition}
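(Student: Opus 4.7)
The plan is to handle the three inequalities in turn. The bound $CC_2^\star \leq 0$ is immediate: the trivial ``all one cluster'' partition is 2-colorable with cost $0$, and $CC_2^\star$ is a minimum. The bound $CC_2^\star \geq CC_4^\star$ follows from $\mathcal{C}_2 \subseteq \mathcal{C}_4$, so the minimization over the larger class can only decrease the value. All of the real content lies in $CC_4^\star \geq \tfrac{3}{2} CC_2^\star$.

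For that inequality I would use an averaging argument over the three ways of pairing up the four color labels. Fix an optimal 4-coloring $L^\star \in \{1,2,3,4\}^V$ achieving $CC_4^\star$, and enumerate the three bipartitions of the label set into two pairs: $P_1=\{\{1,2\},\{3,4\}\}$, $P_2=\{\{1,3\},\{2,4\}\}$, $P_3=\{\{1,4\},\{2,3\}\}$. Each $P_i$ collapses $L^\star$ into a 2-coloring $L^{(i)}$ of $V$ (vertices mapped to the same pair are merged). Let $c_i = \sum_{(u,v)\in E} \theta_{uv}\,[L^{(i)}_u \neq L^{(i)}_v]$ be its cost. Since $L^{(i)} \in \mathcal{C}_2$, optimality of $CC_2^\star$ gives $c_i \geq CC_2^\star$ for each $i$.

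The key calculation is to sum $c_1+c_2+c_3$ edge by edge. If $L^\star_u = L^\star_v$, then $(u,v)$ is uncut in all three induced 2-colorings. If $L^\star_u \neq L^\star_v$, then the unordered pair $\{L^\star_u,L^\star_v\}$ appears as one of the two blocks of exactly one of the three bipartitions, so $(u,v)$ is uncut in that one and cut in the remaining two. Hence each edge contributes $2\theta_{uv}\,[L^\star_u \neq L^\star_v]$ to the sum, giving $c_1+c_2+c_3 = 2\,CC_4^\star$. Combining with $c_i \geq CC_2^\star$ yields $3\,CC_2^\star \leq 2\,CC_4^\star$, which rearranges to $CC_4^\star \geq \tfrac{3}{2}CC_2^\star$ (the direction is consistent because both sides are non-positive, so multiplying $CC_2^\star$ by $\tfrac32$ only makes it more negative).

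I do not anticipate any significant obstacle. The only points that need a small amount of care are (i) verifying the ``$2$-out-of-$3$'' combinatorial identity on cut edges, which is a straightforward case check over the pair $\{L^\star_u, L^\star_v\}$, and (ii) keeping track of inequality directions, since all the quantities are $\leq 0$ and it is easy to flip a sign when multiplying by $\tfrac32$. The argument is independent of planarity and uses no structure on $\theta$ beyond the problem definition, which is consistent with the proposition being stated for ``any graph.''
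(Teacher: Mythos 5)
Your proposal is correct and follows essentially the same argument as the paper: both consider the three 2-colorings obtained by merging pairs of labels of an optimal 4-coloring, observe that each cut edge is cut in exactly two of the three (the paper phrases this as each $S(a,b)$ term appearing twice in $E_a+E_b+E_c$), and conclude $2\,CC_4^\star = c_1+c_2+c_3 \geq 3\,CC_2^\star$. Your explicit treatment of the first two inequalities and the sign bookkeeping is a minor addition, not a different route.
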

\begin{proof}
For a partitioning described by some labeling $L$, let
\[
S(a,b) = \sum_{\substack{u: L_u = a \\ v: L_v = b}} \theta_{u,v}
\]
denote the sum of weights of edges between vertices labeled $a$ and those labeled $b$.
Take the 4-colorable partition whose cost is ${CC}_4^\star = \sum_{a<b}
S(a,b)$ and consider the 2-colorable partitions in which pairs of labels
from the 4-coloring are merged.  There are three such 2-colorings, each with
the following costs
\begin{align}
E_a = S(1,3) + S(1,4) + S(2,3) + S(2,4)\\
E_b = S(1,2) + S(1,4) + S(3,2) + S(3,4)\\
E_c = S(1,2) + S(1,3) + S(4,2) + S(4,3)
\end{align}
Summing these costs includes every possible $S$ term twice so we have
\begin{align}
CC_4^\star &= \frac{1}{2} (E_a + E_b + E_c)\\
           &\geq \frac{3}{2} \min \{E_a,E_b,E_c\}\\
           &\geq \frac{3}{2} {CC}_2^\star 
\end{align}
The first inequality follows since one of the three terms in the sum must
be at least as small as $1/3$ the total.  The second inequality follows since
none of these $2$-colorings can have lower cost than the optimum $2$-coloring.
The same approach can be used to relate any pair of $CC^\star_m$ and
$CC^\star_n$.
\end{proof}

\begin{corollary}
If ${CC}_2^\star= 0$ then ${CC}_4^\star=0$.
\end{corollary}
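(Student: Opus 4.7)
The plan is to observe that the corollary follows immediately from the proposition by substituting $CC_2^\star = 0$ into the chain of inequalities $0 \geq CC_2^\star \geq CC_4^\star \geq \frac{3}{2} CC_2^\star$. The only step is to note that both the upper bound on $CC_4^\star$ coming from $CC_2^\star$ and the lower bound $\frac{3}{2} CC_2^\star$ collapse to $0$ when $CC_2^\star = 0$, so $CC_4^\star$ is squeezed to $0$.

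More concretely, I would state: by the proposition, $CC_4^\star \leq CC_2^\star = 0$, and also $CC_4^\star \geq \frac{3}{2} CC_2^\star = 0$. Hence $CC_4^\star = 0$. No extra combinatorial argument is needed, since the work was already done in the proposition.

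There is really no obstacle here; the corollary is essentially a sanity check that the $\frac{3}{2}$-bound is tight at $0$. It is worth noting in the write-up that the conclusion also has a direct interpretation: if the optimal 2-coloring has cost $0$, then the trivial all-in-one-cluster partition (which has cost $0$ and is trivially 2-colorable) is optimal among 2-colorings, and since the all-in-one-cluster partition is also a valid 4-coloring with cost $0$, and $CC_4^\star \leq 0$ always, we get $CC_4^\star = 0$. But since the proposition is already stated and proved, the one-line deduction from the squeeze is the cleanest presentation.
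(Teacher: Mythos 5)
Your proof is correct and matches the paper's (implicit) argument: the corollary is stated as an immediate consequence of the proposition, obtained by the same squeeze $0 = CC_2^\star \geq CC_4^\star \geq \frac{3}{2}CC_2^\star = 0$. Nothing further is needed.
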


%
Since the 2-colorable clustering is only off by a constant factor and provides
a very efficient solution for finding approximate correlation clusterings, it 
seems a likely candidate for segmentation.  However, in practice, it performs
poorly for real image segmentation problems.  In natural images, T-junctions
where three different image segments come together are common, and such a junction
cannot be 2-colored! In the next section, we devise a tighter bound which uses
the 2-coloring as a subroutine.

\section{Lower-bounding Planar Correlation Clustering}
Dual-decomposition provides a very general framework for tackling difficult 
problems by splitting them into a collection of tractable sub-problems which are
solved independently subject to the constraint that they agree on their
solutions.  This constraint is enforced in a soft way using Lagrange
multipliers, which results in a dual solution that lower-bounds the original
minimization problem.  Decomposition techniques have been studied in the
optimization community for decades.  Dual-decomposition was used by Wainwright
et al.~\cite{Wainwright05} to derive algorithms for inference in graphical
models and has become increasingly popular in the computer vision literature
recently due to its flexibility~\cite{Komodakis07}.

We consider bounding the planar correlation clustering by a decomposition into
two sub-problems, an easier partitioning problem and an independent edge
problem that does not enforce the clustering constraints.  To make the
partitioning problem tractable, we impose a constraint on the decomposition so
that the cost of the optimal clustering can be computed. Recall our notation
that $CC^\star(\lambda)$ is the optimal correlation clustering cost associated
with edge weights $\lambda$.  Let the set $\Omega = \{\lambda :
CC^\star(\lambda)=0\}$ be those those edges weights for which the optimal
clustering has zero cost.  We can then write the following decomposition bound:

\begin{align}
   CC^\star &= \min_{X \in \mathcal{C}_4} \sum_{e \in E} \theta_e X_e \\
     &= \max_\lambda \left(\min_{\hat{X} \in \mathcal{C}_4} \sum_{e \in E} \lambda_e \hat{X}_e \right) + \left( \min_{\bar{X}} \sum_{e \in E} (\theta_e - \lambda_e) \bar{X}_e \right) \label{eqn:line1} \\
     &\geq \max_{\lambda \in \Omega} \left(\min_{\hat{X} \in \mathcal{C}_4} \sum_{e \in E} \lambda_e \hat{X}_e \right) + \left( \min_{\bar{X}} \sum_{e \in E} (\theta_e - \lambda_e) \bar{X}_e \right) \label{eqn:line2}  
\end{align}
\begin{align}
     &= \max_{\lambda \in \Omega} \left( \min_{\bar{X}} \sum_{e \in E} (\theta_e - \lambda_e) \bar{X}_e \right) \label{eqn:line3}\\
     &= \max_{\lambda \in \Omega} \sum_{e \in E} \min\{(\theta_e - \lambda_e),0\} \label{eqn:line4}
\end{align}

In equation (\ref{eqn:line1}) we have decomposed the original edge weights
$\theta$ across two sub-problems.  The first is a correlation clustering
problem (identical in form to our original problem) while the second one
independently optimizes over all the edges (with no constraints on $X$). 
For any choice of $\lambda$, these two objectives sum up to the original 
problem.  Since the configurations ($\hat{X}$,$\bar{X}$) in each sub-problem
are optimized independently, the sum of their energies may produce a lower-bound for
arbitrary $\lambda$ but the bound can be made tight
(setting $\lambda_e=\theta_e$ recovers the original objective).  In equation
(\ref{eqn:line2}), we restrict the domain of $\lambda$ to those settings for
which the clustering sub-problem has an optimum of zero.  The inequality arises
since we are maximizing the bound over a more restrictive set.  Finally, in
equation (\ref{eqn:line3}) we have simplified the expression since the
constraint on $\lambda$ entails that the first term is exactly zero and
$\bar{X}$ can be optimized independently for each edge.

\section{Bound Optimization using Linear Programming}
Lagrangian relaxation approaches typically use projected sub-gradient ascent or
other non-smooth optimization techniques to tackle objectives like that shown
in equation (\ref{eqn:line2}).  Here it is difficult to compute the required (sub)gradient
information since, for a given setting of $\lambda$, there isn't an obvious way
to recover the full set of optimizing solutions for $\hat{X}$ beyond the trivial
solution $\hat{X_e}=0$.  The constraint set $\Omega$ also appears quite
complicated. However, we do have an efficient method for testing membership in $\Omega$.  By our earlier proposition,
\begin{align}
\Omega &= \{\lambda : CC^\star(\lambda)=0\} = \{\lambda : CC^\star_2(\lambda)=0\} \\
       &= \{\lambda : \sum_{e} \lambda_e X_e \geq 0 \quad \forall X \in \mathcal{C}_2\}
\end{align}
This expression highlights that $\Omega$ is a polytope defined by a set of linear
inequalities. For a given $\lambda$, we can test membership and, if $\lambda \not\in
\Omega$, produce a violated constraint described by a negative
weight 2-colorable clustering.  This provides a method to solve equation
(\ref{eqn:line4}) using cutting planes to successively
approximate the constraint set $\Omega$.


We say that an edge $e$ is {\em constrained} by $\Omega$
for a given setting of $\lambda$ if there exists some cut in $X \in
\mathcal{C}_2$ with $X_e=1$ and for which $\sum_{e} \lambda_e X_e = 0$.  If an
edge $e$ is unconstrained, then we can decrease $\lambda_e$ and thereby
increase the bound until it becomes constrained or until it is no longer cut in
the independent edge problem.

To simplify the bound optimization, we first consider some additional
constraints on $\lambda$.  When maximizing the bound over $\lambda_e$, it is
always the case that there is an optimal $\lambda_e \geq \theta_e$.  Choosing
$\lambda_e < \theta_e$ gives the edge $e$ positive weight in the independent
edge problem so it does nothing to increase the objective.  Further, any amount
by which $\lambda_e$ is less than $\theta_e$ can only make the constraints
$\Omega$ more difficult to satisfy.  Therefore, we are free to only consider
$\lambda$ for which $(\theta_e-\lambda_e)\leq 0$ without impacting the final
bound. This simplifies the expression for the objective in equation
($\ref{eqn:line4}$) by removing the $\min$.

We also impose upper bounds on $\lambda$.  For edges with $\theta_e < 0$ we
add the constraint that $\lambda_e \leq 0$.  For edges with $\theta_e \geq
0$ we impose the constraint that $\lambda_e = \theta_e$. These constraints are
sensible in that they are coordinate-wise optimal (e.g., increasing $\lambda_e$
above $\theta_e \geq 0$ decreases the bound). In Appendix B, we show that any
optimal $\lambda$ can be deformed to one which satisfies these additional
constraints without loosening the bound.  In practice these additional
constraints make the bound optimization far more efficient.

We can now write the bound optimization problem with these additional constraints
explicitly as standard linear program:
\begin{align}
&\max_{\lambda} \sum_e (\theta_e - \lambda_e) \label{LP}\\
&\text{s.t.} \quad \theta_e \leq \lambda_e \leq \max\{0,\theta_e\} \notag \\
&\quad \quad \sum_{e} \lambda_e X_e \geq 0 \quad \quad \forall X \in \mathcal{C}_2 \notag
\end{align}

This LP has an exponential number of constraints, one for every possible
2-colorable partition $X$. To solve this LP
efficiently, we use a cutting plane approach to successively add violated
constraints to a collection.  Our final algorithm for bound optimization is
given in Figure \ref{figLP}. 

In our actual implementation, we perform one additional step.  Each new
constraint $X$ may partition the graph into multiple components.  We break the
cut $X$ up into the set of basic cuts, each of which isolates a component.  We add this
collection of constraints as a batch.  With this modification, we find that in
practice very few batches of constraints (typically 5-10) are necessary in
order to produce a solution to the full linear program.

\begin{figure}[t]
\fbox{
\begin{tabular}{cc}
\begin{minipage}{0.45\textwidth}
\mbox{\bf Lower-bound optimization}
\begin{algorithmic}
  \State $\mathcal{P} = \emptyset$
  \While {$CC^\star_2(\lambda) < 0$}
     \State $X = \arg\min_{X \in \mathcal{C}_2} \sum_{e \in E} \lambda_e X_e$
     \State $\mathcal{P} = \mathcal{P} \cup X$
     \State Solve (\ref{LP}) with partial constraint set $\mathcal{P} \subset \mathcal{C}_2$
  \EndWhile
\end{algorithmic}
\end{minipage}
&
\begin{minipage}{0.5\textwidth}
\mbox{\bf Upper-bound decoding}
\begin{algorithmic}
  \State $S = 0$
  \State $\mathcal{E} = \{e : \theta_e - \lambda_e < 0\}$
  \For{$e \in \mathcal{E}$}
     \State $X = \arg\min_{X \in \mathcal{C}_2, {X_e = 1}} \sum_{f \in E} \lambda_f X_f$
     \State $S' = \max(S,X)$
     \If {$ \sum_e \theta_e S'_e \leq \sum_e \theta_e S_e $}
        \State S = S'
        \State $\forall e : X_e=1, \quad \lambda_e = 0$
     \EndIf
  \EndFor
\end{algorithmic} 
\end{minipage} \\
\end{tabular}}
\caption{(left) Cutting plane algorithm for computing the optimal lower-bound
by successively adding constraints. (right) Upper-bound decoding by recursive
partitioning\label{figLP}}
\end{figure}

\section{Decoding upper-bounds}

\subsection{Recursive Bipartitioning}
Once we have optimized the lower-bound, we would like to find a corresponding
low-cost clustering.  
%
%
In general, there will be some edges for which $(\theta_e-\lambda_e)<0$.
In order for the bound to be tight, we need to find a clustering in which 
these edges are cut. As noted in the previous section, every such ``must cut''
edge is constrained by some cut $\hat{X}$ that includes that edge.  Although
none of the individual minimal cuts $\hat{X}$ may agree with all of the ``must
cut'' edges in the independent sub-problem (second term in equation
(\ref{eqn:line2})), there is some minimal cut that agrees with each one.  

Motivated by this intuition, one can use the following decoding technique.
Start with the original clustering sub-problem which has edge weights $\lambda$.
Choose an ordering of those edges $e$ for which $(\theta_e-\lambda_e)<0$.  For
each of these ``must cut'' edges in turn, find a zero weight cut $X \in
\mathcal{C}_2$ for the clustering subproblem and add it to the final
partitioning as long as it decreases the original objective. Remove these cut
edges from the graph and continue on with the next edge.  
The pseudo-code is displayed in Figure \ref{figLP}.

\subsection{Dual LP Rounding}

An alternative approach is to consider the dual LP to equation (\ref{LP}).
Let $C$ be a
matrix whose rows contain the indicator vectors for cuts $X \in {\mathcal C}$.
Define the convex cone ${\mathcal C}_2^\triangle = \{C^T\alpha, \alpha\geq 0\}$ which
is known as the ``cut cone''~\cite{Deza}.  
It is straightforward to see that the set of valid partitions lives inside the
cut cone (${\mathcal C} \subset {\mathcal C}_2^\triangle$).  Given a valid
partition indicator vector $X$, we can write it as a linear
combination of cuts, where each cut isolates an individual segment and the cuts
are assigned a weight of $\alpha_i = 0.5$.  

The dual LP to our lower-bound is given by
\begin{align}
&\min_{z} \theta^T z - \min(\theta^T,0)\max(z-1,0) \label{eqn:dualLP1}\\
&\text{s.t.} \quad z \in {\mathcal C}_2^\triangle \notag
\end{align}
The first term in the objective is exactly our original correlation clustering
objective where the binary indicator $X$ has been replaced by real valued $z$.
The second term in the objective arises from the upper-bound constraints
imposed on $\lambda$ and effectively cancels out
the benefit of cutting any negative weight edge by an amount of more than one
(see Appendix A).
To compute a solution to the dual, we solve (\ref{eqn:dualLP1}) using a matrix $C$ that
contains only those cut vectors in ${\mathcal P}$ produced during the
lower-bound optimization.  The resulting solution vector $z$ is thresholded to produce
a final segmentation.

Taking the dual of equation (\ref{LP}) without the upper-bound constraints
on $\lambda$ yields a nice interpretation
of our algorithm as a convex relaxation of the original discrete clustering
problem in which the convex hull of ${\mathcal C}$ is approximated by the
intersection of the cut cone and the unit cube:
\begin{align}
&\min_{z} \theta^T z &\text{s.t.} \quad z \in {\mathcal C}_2^\triangle, \quad 0 \leq z \leq 1  \label{eqn:dualLP2}
\end{align}
Since the cut cone for planar graphs can be described with a polynomial number
of constraints~\cite{Barahona83}, one could directly solve the dual LP in equation
(\ref{eqn:dualLP2}).  Our bound optimization gains considerable efficiency by
not using the full set of cuts $C$.  Instead, a small number of cutting planes
(in the original LP) provides a delayed column generation scheme for solving
the dual LP. 

When only using a subset of cuts, the second term
in (\ref{eqn:dualLP1}) and the corresponding constraints in the primal LP are
necessary since the bound can be tight without the optimal partition
vector living in the subspace of the cut cone described by ${\mathcal P}$.
Allowing solutions with $z>1$ lets us access a larger set partitions without
increasing the dimensionality of the subspace.

\begin{figure}[t]
\centering

\includegraphics[width=2.0in,trim=1.5in 6in 1in 3in]{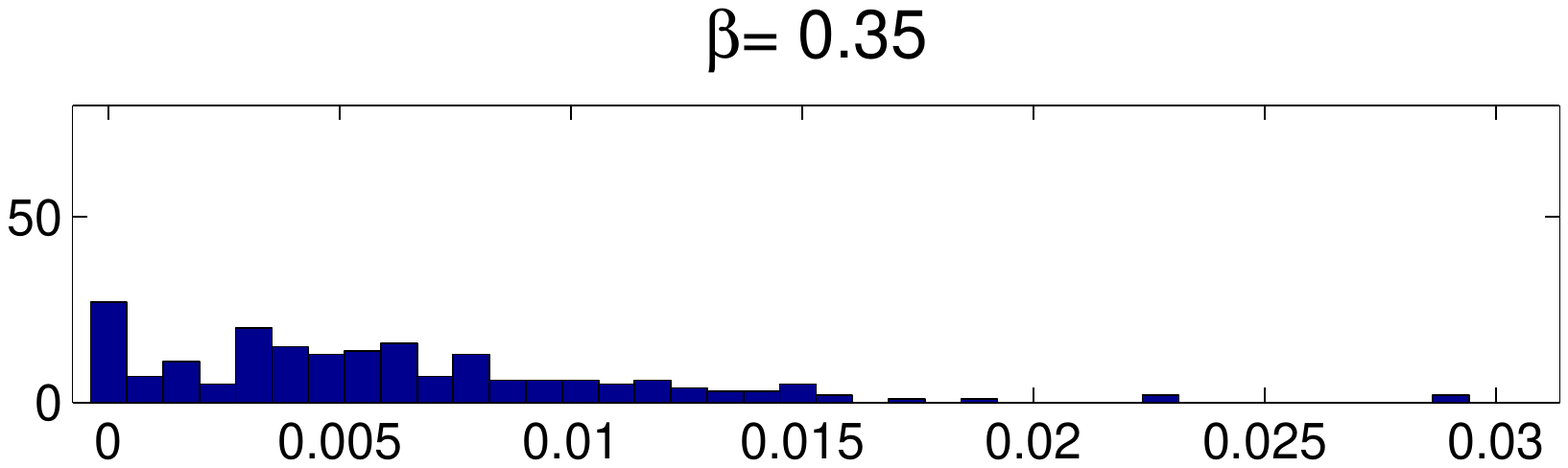}
\includegraphics[width=2.0in,trim=1.0in 6in 1.5in 3in]{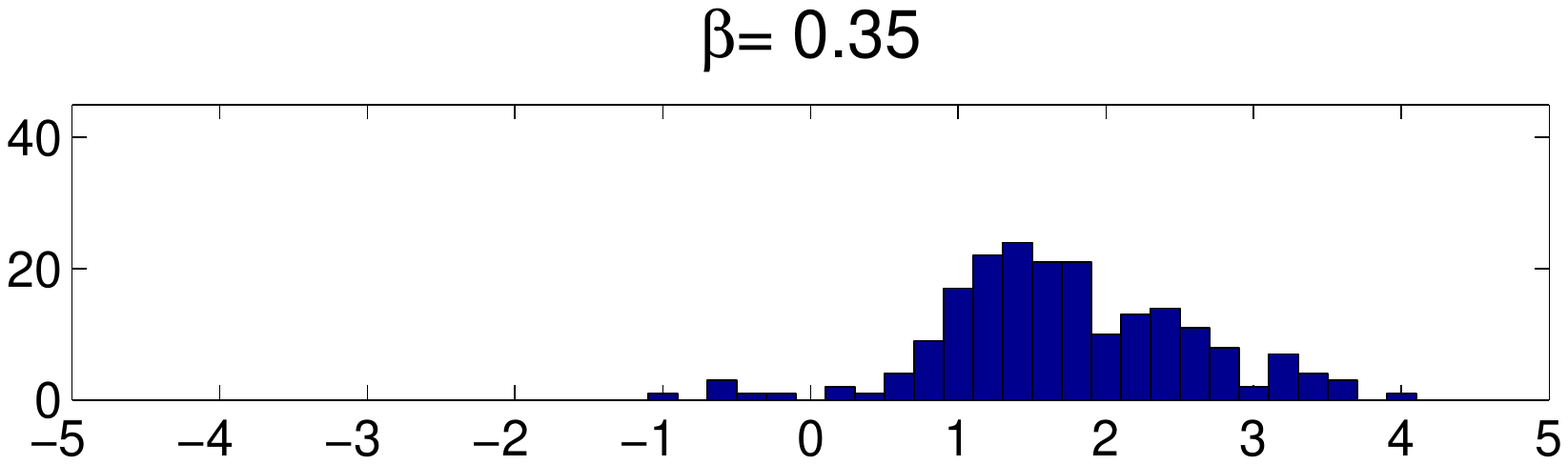}
\\
\includegraphics[width=2.0in,trim=1.5in 6in 1in 3in]{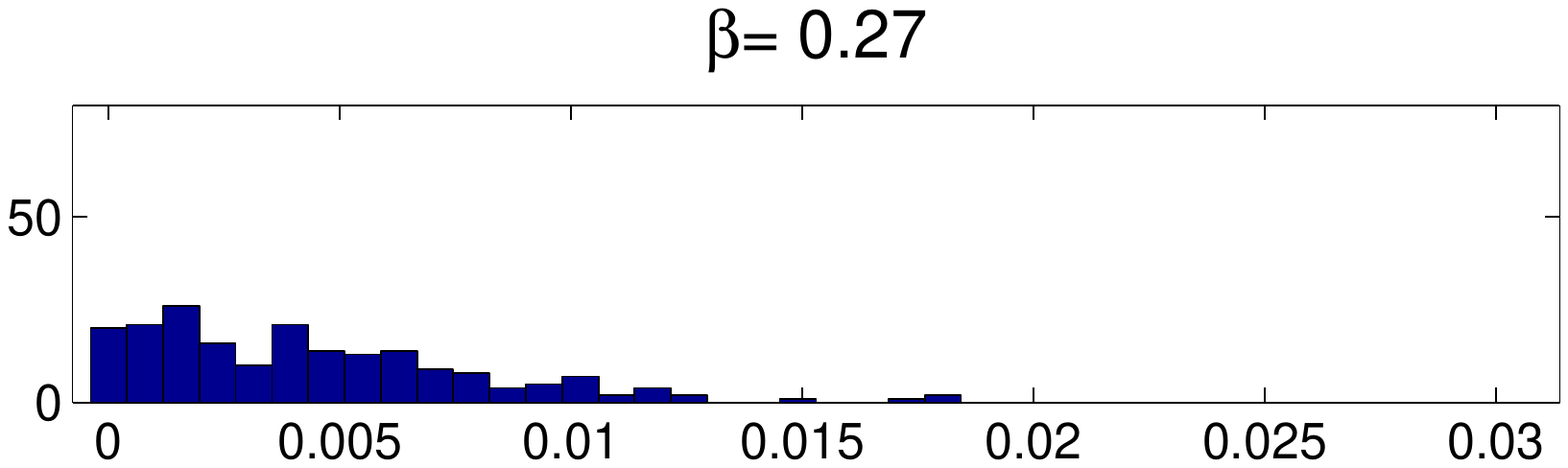}
\includegraphics[width=2.0in,trim=1.0in 6in 1.5in 3in]{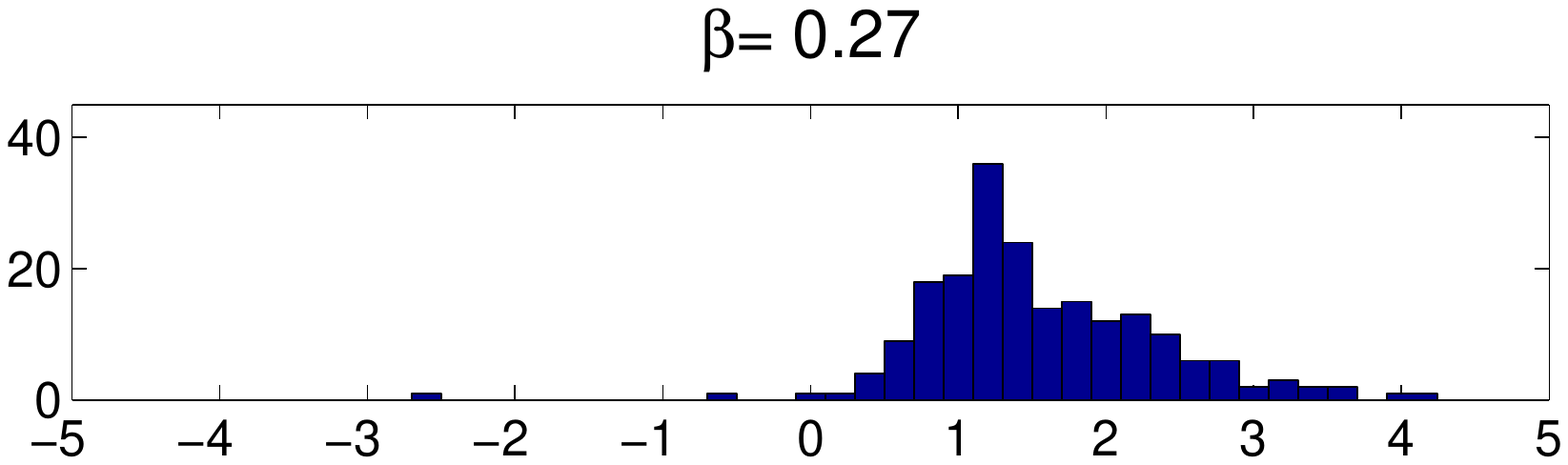}
\\
\includegraphics[width=2.0in,trim=1.5in 6in 1in 3in]{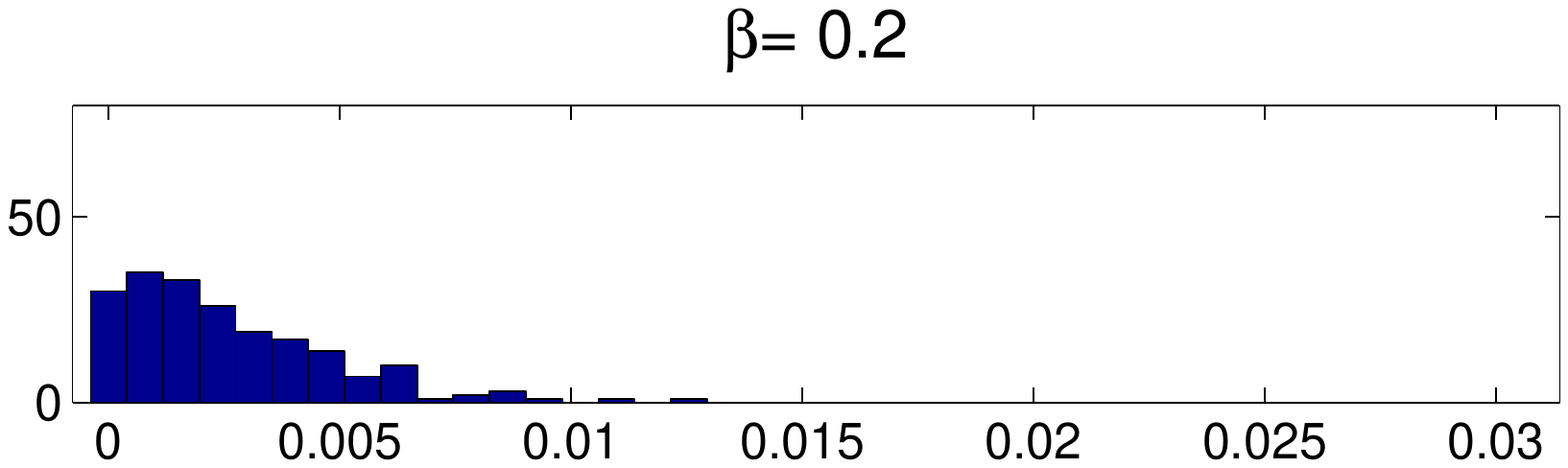}
\includegraphics[width=2.0in,trim=1.0in 6in 1.5in 3in]{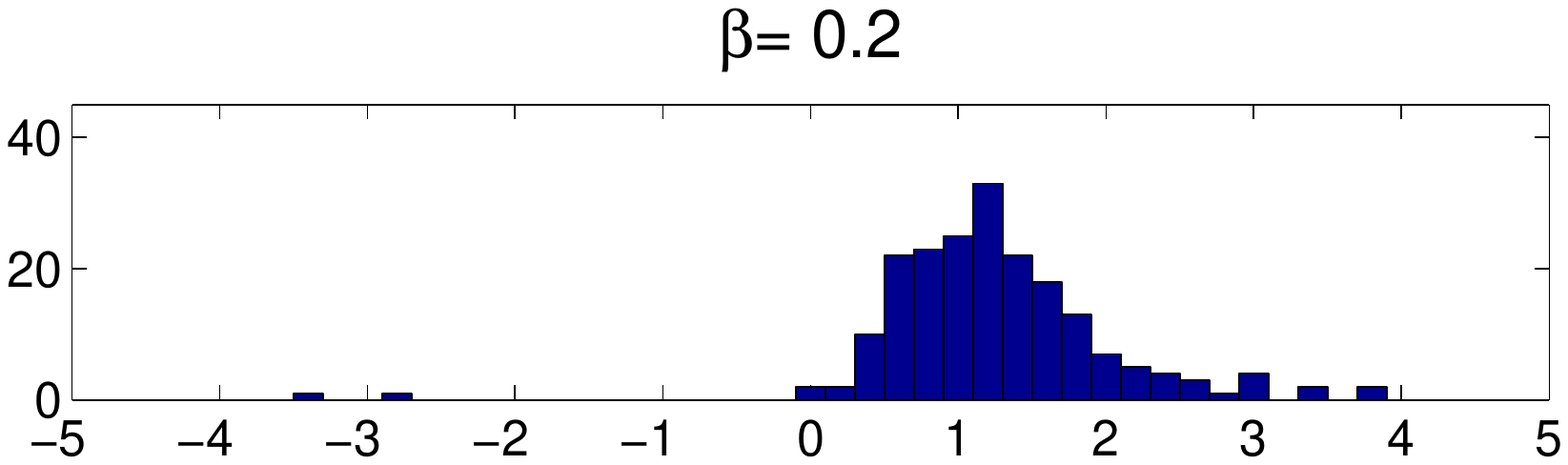}
\\
\includegraphics[width=2.0in,trim=1.5in 4.5in 1in 3in]{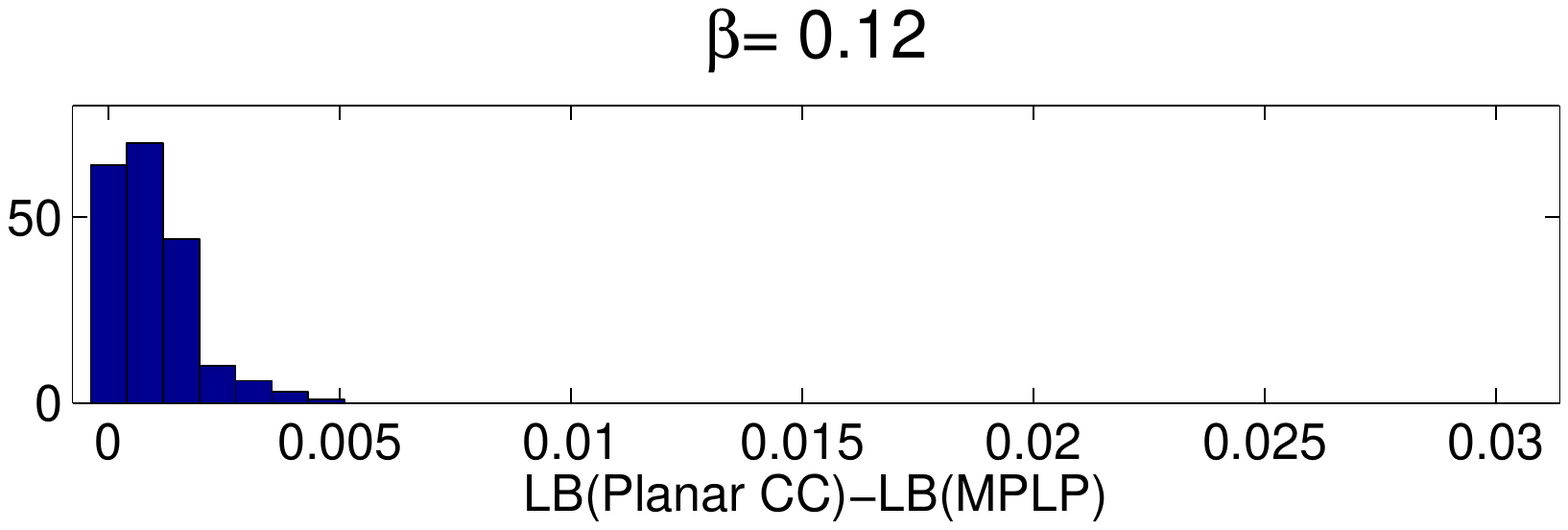}
\includegraphics[width=2.0in,trim=1.0in 4.5in 1.5in 3in]{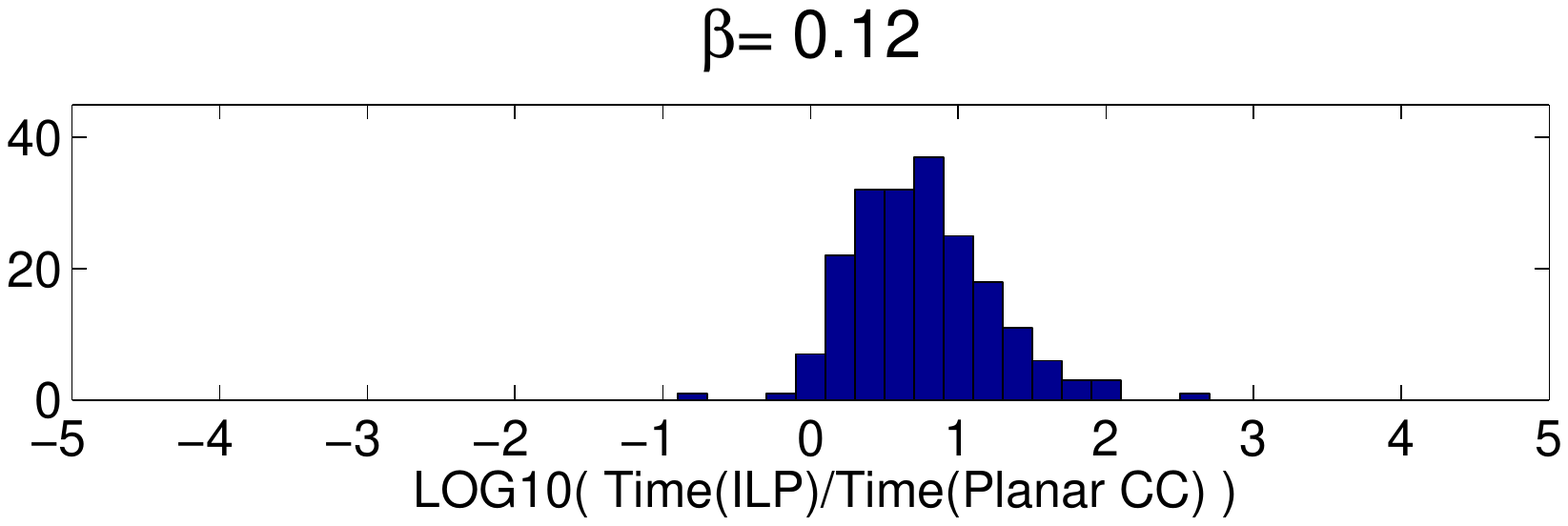}
\\

\caption{Comparison of bound optimization on image segmentation problems.  Each
graph shows the distribution results over 200 problem instances at four
different threshold settings ranging from coarse ($\beta=0.35$) to fine
($\beta=0.12$). The left column shows the difference in the lower-bounds
returned by the PlanarCC bound and MPLP using the set of face cycle
constraints. Our code returned tight bounds in all but one instance while the
LP relaxation typically gave looser bounds.  The right column shows the running
times of our approach compared to the ILP branch-and-cut advocated by
\cite{Andres11}.  Here we plot relative speedup factors on a logarithmic scale.
We find that the PlanarCC bound computation and decoding produces the same
global optima as the ILP approach much faster.  \label{fig:comparison0}}
\end{figure}

\section{Experiments}

We demonstrate the performance of our algorithm on correlation clustering
problem instances from the Berkeley Segmentation Data set~\cite{MFTM01,Arbelaez11}.
Our clustering problem is defined on the superpixel graph given by performing
the oriented watershed transform (owt) on the output of the ``generalized
probability of boundary'' (gPb) boundary detector output as proposed
by~\cite{Arbelaez11}.  Each pair of superpixels that are adjacent in the image
are connected by an edge whose weight is given by
\[
  \theta_e = \log\left(\frac{1-gPb_e}{gPb_e}\right) + \beta
\]
where $gPb_e$ is the average $gPb$ along the edge and $\beta$ is a threshold
parameter that modulates the number of segments in the optimal clustering. Large
$\beta$ results in more positive edges and hence coarser optimal segmentations.
To compare different optimizers, we use $4$ different settings of
$\beta=\{0.35,0.27,0.20,0.12\}$ which produces segmentation outputs that cover
a range of granularities. Parameters $\theta$ were rounded to 5 decimal places in
order to simplify tests of convergence.

We implemented our optimization using the BlossomV minimum weight perfect
matching code~\cite{Kolmogorov09,SchraudolphTR} and IBM's CPLEX solver to
optimize the lower-bound. We used a tolerance of $-10^{-6}$ as a stopping
criterion for adding additional
constraints to the lower-bound LP.  We found that both decoding schemes 
work well. In the experiments described here, we computed up to ten upper bounds using the
recursive bipartitioning procedure, each run using a random order for adding
contours.  This process terminated early if the lower- and upper-bounds were
equal.

For a baseline lower-bounding scheme, we used max-product linear programming
(MPLP)~\cite{Sontag08} which efficiently solves an LP relaxation of the original
clustering objective. To represent the set of clusterings in terms of node
labels, each superpixel takes on one of 4 states and pairwise potentials encode
the boundary strength between neighboring superpixels. As mentioned before, the
standard edge-based relaxation is uninformative when unary potentials are
absent so we include the set of cycle constraints given by collection of cycles
that bound the planar faces of the superpixel graph.  This is not sufficient to
enforce consistency over all cycles in the graph but is a natural choice
commonly used in the literature. In our experiments, we used a fast, in-house
implementation of MPLP.

We also implemented the branch-and-cut ILP technique proposed by \cite{Andres11}
using the CPLEX ILP solver. This approach finds an integral solution of the
correlation clustering objective, removes cut edges specified in the solution
and then produces a partition by finding connected components of the resulting
graph. It then searches for inconsistent edges, namely cut edges that connect
two nodes that lie within the same connected component. If any such edges are found,
a constraint is added to enforce consistency of that edge and the ILP is re-solved.

\subsection{Bound optimization experiments}

Figure \ref{fig:comparison0} shows a comparison of the lower-bounds generated
by MPLP compared to that generated by PlanarCC. We found that the time needed
for MPLP to solve each problem is comparable to that of PlanarCC.  However the
differences in the lower-bound are significant. With only the set of face cycles,
MPLP is seldom able to produce a tight lower-bound.  In contrast, the PlanarCC
approach typically gives tight bounds with only 5-10 batches of cut constraints.

We found that the upper-bounds (solutions) generated by ILP and PlanarCC are
very similar and very close to optimal so we compare the time consumed by each
algorithm as a function of $\beta$.  In Figure \ref{fig:comparison0}(a) we show
histogram of the comparative run times, $\log_{10}(T_{ILP}/T_{PlanarCC})$.

\begin{figure}[t]
\centering
\includegraphics[width=3in,trim=1in  1in 1in 1in]{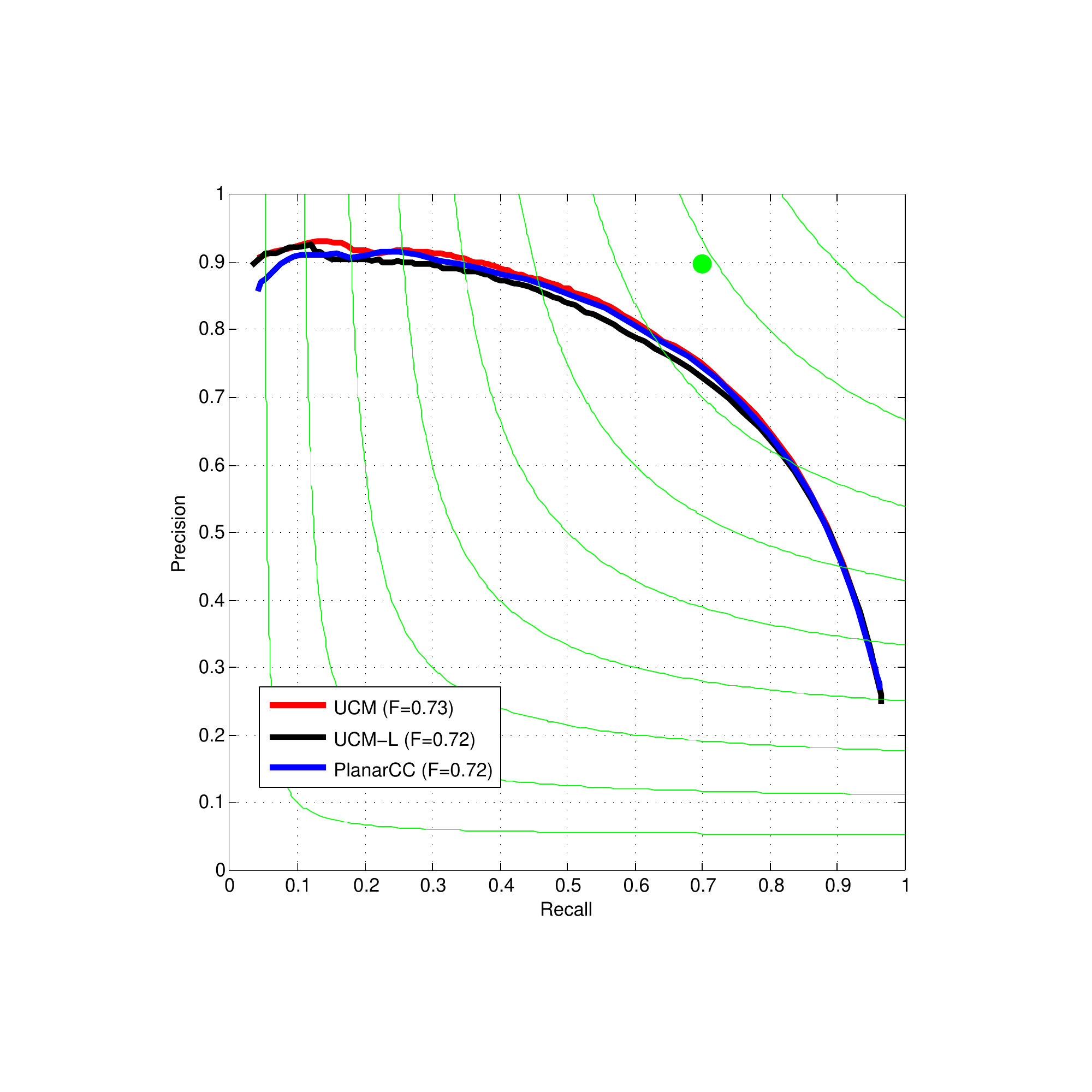}
\caption{Evaluation on the BSDS500 segmentation boundary benchmark. We compare
segmentation performance to the state of the art technique (gPb+owt+UCM)
proposed by \cite{Arbelaez11}. We use the same set of
superpixels and contour cues derived from (gPb+owt).  We compare to two
different variants of the UCM algorithm based on region merging. UCM
performs a length weighted average of the gPb along contours after each merge
while UCM-L performs a uniform average. We find that the globally optimal
correlation clustering returned by our algorithm performs slightly better than
the uniform averaging version of UCM but the length-weighted UCM gives better final
performance.
}
\label{fig:comparison1}
\end{figure}

Note that the relative performance of PlanarCC improves as we move from a high
detail segmentation $\beta=0.12$ to a coarse segmentation $\beta=0.35$.  For
coarse segmentations (large $\beta$) the optimal solution contains many long
contours and PlanarCC performs well relative to ILP, whereas for detailed
segmentations ILP will tend to do more favorably. For example, in the limit
where all the edges have negative weight, the ILP approach or LP relaxation
gives the correct answer (cut all edges) without the need for any constraints.
However on average, we find that the PlanarCC approach performs favorably across
a range of useful thresholds on the BSDS images, giving speedups that range from
10 to 1000x.


\subsection{Segmentation performance}

We benchmark the quality of the segmentations produced by correlation
clustering for a range of thresholds $\beta$ on the BSDS500 test set. We use
the same superpixels and local cues as the top performing gPb+owt+UCM algorithm
of Arbelaez et al.~\cite{Arbelaez11}. Figure \ref{fig:comparison1} shows the
benchmark results of our algorithm and two variants of the UCM algorithm.
As visible in the figure, our algorithm performs comparably to UCM and performs
slightly better than the results of \cite{Andres11} who report an F-measure of
$0.70$.

The UCM algorithm is a region merging algorithm that successively merges the
two superpixels that have the lowest energy edge between them.  Since this
algorithm is ``greedy'' with respect to the clustering objective,  we would
expect that it would occasionally merge two segments due to some small break in
the contour contrast, a fate that our global optimization approach could avoid. However,
as is clear from Figure \ref{fig:comparison1}, in practice the greedy nature of
UCM does not seem to significantly hurt overall performance.

One explanation is that the UCM algorithm modifies the edge costs as it
proceeds.  After each merging step, any new contours that have been formed are
re-assigned the average of the underlying $gPb$.  Our global clustering
objective cannot capture this length weighted averaging.  Figure
\ref{fig:comparison1} shows performance of the UCM algorithm with length-weighted
averaging (UCM) and simple averaging (UCM-L). While our approach outperforms
the non-length weighted version, the differences are not substantial.

Another possible explanation is that the greedy merging is truly successful in
optimizing the correlation clustering objective.  Figure \ref{fig:comparison2}
shows that this is not the case -- while there is usually some UCM threshold that provides a
segmentation with a fairly low-cost clustering, it is still suboptimal compared
to the solutions returned by PlanarCC. This suggests learning an optimal
cue combination via structured prediction may improve performance.

Finally, it is worth noting that the boundary detection benchmark does not
provide strong penalties for small leaks between two segments when the total
number of boundary pixels involved is small.  We found that on the region based
benchmarks, PlanarCC did outperform UCM slightly when the optimal segmentation
threshold was chosen on a per-image basis (GT Covering OIS 0.65 versus 0.64 for
UCM).  We expect these differences may become more apparent in an application
where the local boundary signal is noisier (e.g., biological imaging) or when
there is a greater cost for under-segmentation.

\section{Conclusion}
\begin{figure}[t]
\centering
\includegraphics[width=2.0in,trim=1.5in 6in 1in 3in]{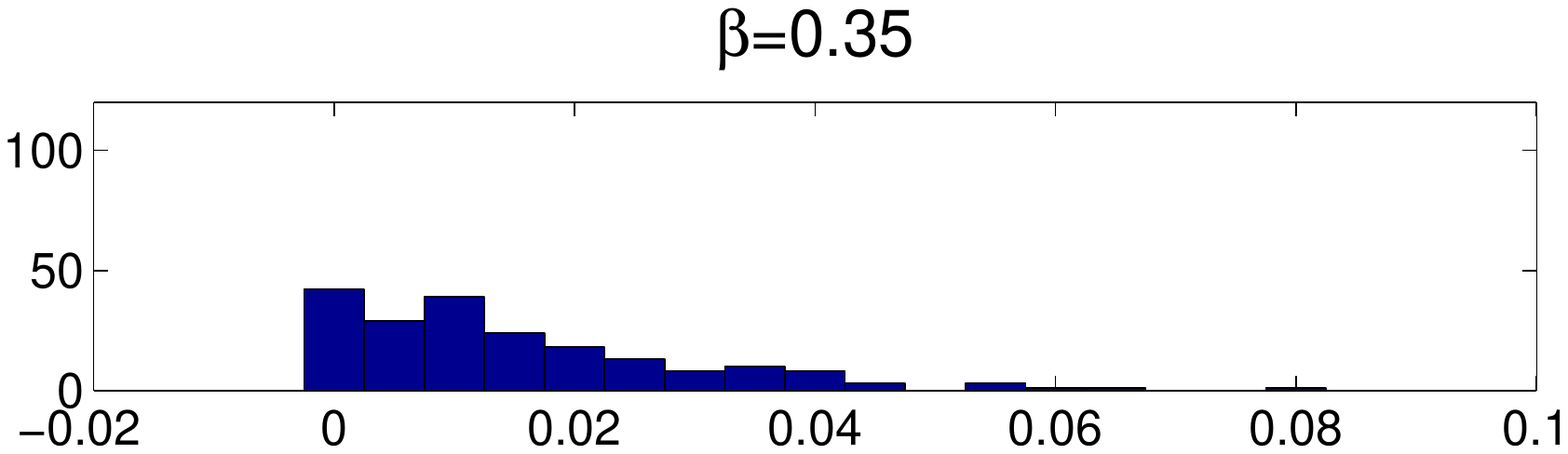}
\includegraphics[width=2.0in,trim=1.0in 6in 1.5in 3in]{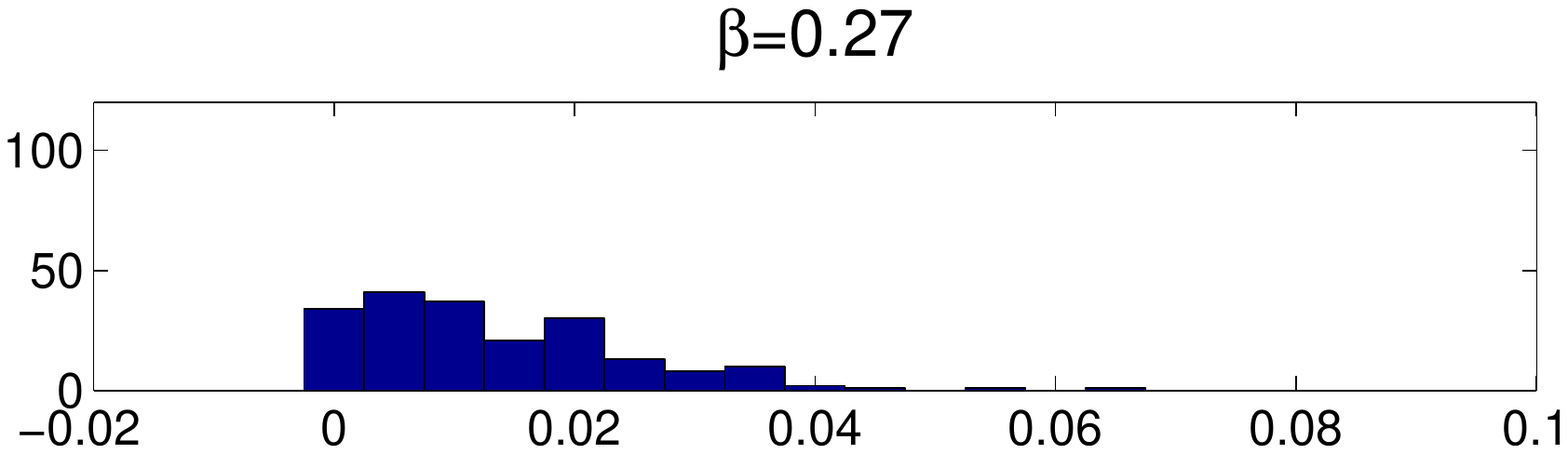}
\\
\includegraphics[width=2.0in,trim=1.5in 4.5in 1in 3in]{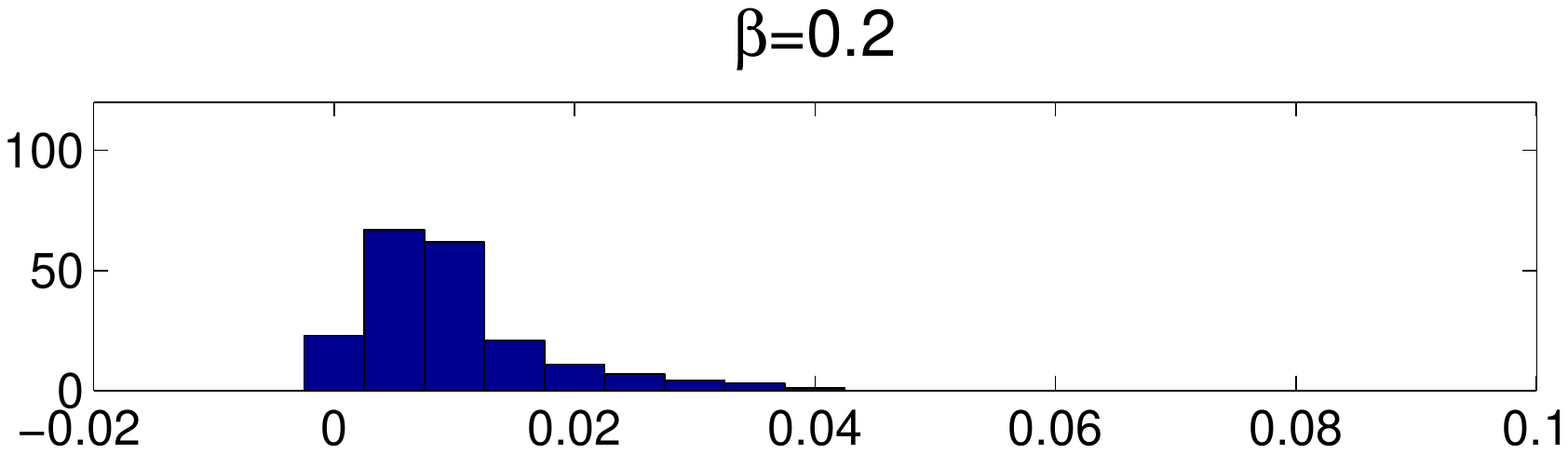}
\includegraphics[width=2.0in,trim=1.0in 4.5in 1.5in 3in]{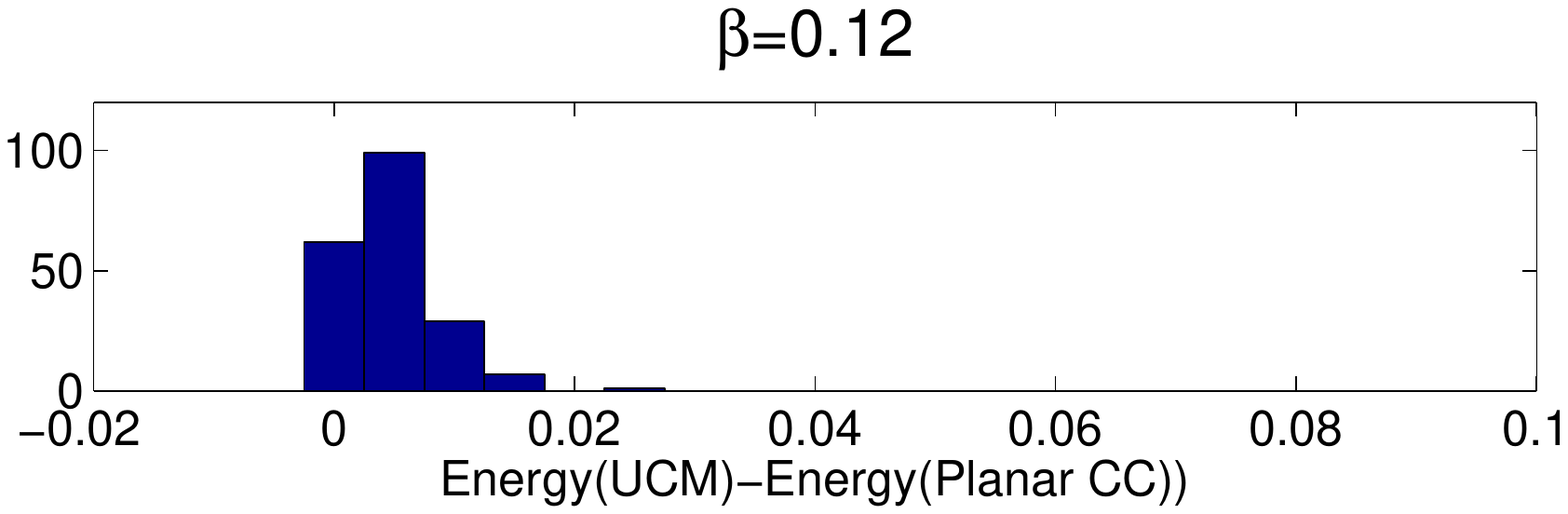}
\\
\caption{We find that our algorithm returns lower-energy segmentations than the
UCM algorithm. This suggests either a mismatch between the correlation clustering
model and the ground-truth or that our model is using suboptimal settings for
the local boundary cues \label{fig:comparison2}}
\end{figure}

We have presented a novel, fast algorithm for finding high quality correlation
clusterings in planar graphs. Our algorithm appears to outperform existing
approaches on a variety of real problem instances.  Our method exploits
decomposition into subproblems that lack efficient combinatorial algorithms but
are still tractable in the sense having efficient oracles. This
offers a new technique in the toolkit of Lagrangian relaxations that we expect
will find further use in the application of dual-decomposition to vision problems.

\noindent {\bf Acknowledgments:} This work was supported by a Google Research
Award, the UC Lab Fees Research Program and NSF DBI-1053036.

\bibliographystyle{splncs}
\bibliography{pc}

\begin{thebibliography}{10}

\bibitem{Boykov:PAMI01}
Boykov, Y., Veksler, O., Zabih, R.:
\newblock Fast approximate energy minimization via graph cuts.
\newblock PAMI (2001)

\bibitem{BorensteinUllman:TPAMI08}
Borenstein, E., Ullman, S.:
\newblock Combined top-down/bottom-up segmentation.
\newblock PAMI (2008)

\bibitem{winn2006layout}
Winn, J., Shotton, J.:
\newblock The layout consistent random field for recognizing and segmenting
  partially occluded objects.
\newblock In: CVPR. (2006)

\bibitem{levin2009learning}
Levin, A., Weiss, Y.:
\newblock {Learning to combine bottom-up and top-down segmentation}.
\newblock International Journal of Computer Vision \textbf{81} (2009)  105--118

\bibitem{Yang11}
Yang, Y., Hallman, S., Ramanan, D., Fowlkes, C.:
\newblock Layered object models for image segmentation.
\newblock TPAMI (2011)

\bibitem{Gould}
Gould, S., Gao, T., Koller, D.:
\newblock Region-based segmentation and object detection.
\newblock In: NIPS. (2009)

\bibitem{Ladicky}
Ladicky, L., Sturgess, P., Alahari, K., Russell, C., Torr, P.:
\newblock What,where and how many? combining object detectors and crfs.
\newblock In: ECCV. (2010)

\bibitem{N-Cuts}
Shi, J., Malik, J.:
\newblock Normalized cuts and image segmentation.
\newblock PAMI (2000)

\bibitem{Wang:etc}
Wang, S., Kubota, T., Siskind, J.M., Wang, J.:
\newblock Salient closed boundary extraction with ratio contour.
\newblock PAMI (2005)

\bibitem{ZhuSongShi:ICCV2007}
Zhu, Q., Song, G., Shi, J.:
\newblock Untangling cycles for contour grouping.
\newblock ICCV (2007)

\bibitem{Sharon_al:Nature06}
Sharon, E., Galun, M., Sharon, D., Basri, R., Brandt, A.:
\newblock Hierarchy and adaptivity in segmenting visual scenes.
\newblock Nature \textbf{442} (2006)  810--813

\bibitem{cour:ncuts}
Cour, T., Benezit, F., Shi, J.:
\newblock Spectral segmentation with multiscale graph decomposition.
\newblock CVPR (2005)

\bibitem{Taskar}
Taskar, B.:
\newblock {Learning Structured Prediction Models: A Large Margin Approach}.
\newblock Stanford University (2004)

\bibitem{BansalBlumChawla}
Bansal, N., Blum, A., Chawla, S.:
\newblock Correlation clustering.
\newblock Machine Learning \textbf{56} (2004)  89--113

\bibitem{Demaine}
Demaine, E., Emanuel, D., Fiat, A., Immorlica, N.:
\newblock Correlation clustering in general weighted graphs.
\newblock Theoretical Computer Science \textbf{361} (2006)  172--187

\bibitem{Dahlhaus}
Dahlhaus, E., Johnson, D., Papadimitriou, C., Seymour, P., Yannakakis, M.:
\newblock The complexity of multiterminal cuts.
\newblock SIAM J. Computing \textbf{4} (1994)  864--894

\bibitem{Bachrach}
Bachrach, Y., Kohli, P., Kolmogorov, V., Zadimoghaddam, M.:
\newblock Optimal coalition structures in graph games.
\newblock In: arXiv:1108.5248v1. (2011)

\bibitem{Andres11}
Andres, B., Kappes, J., Beier, T., Koethe, U., Hamprecht, F.:
\newblock Probabilistic image segmentaiton with closedness constraints.
\newblock In: ICCV. (2011)

\bibitem{Kim11}
Kim, S., Nowozin, S., Kohli, P., Yoo, C.:
\newblock Higher-order correlation clustering for image segmentation.
\newblock In: NIPS. (2011)

\bibitem{AppelHaken}
Appel, K., Haken, W.:
\newblock {Every Planar Map is Four-Colorable}.
\newblock American Mathematical Society (1989)

\bibitem{Wainwright05}
Wainwright, M.J., Jaakkola, T., Willsky, A.S.:
\newblock {MAP} estimation via agreement on (hyper)trees: message-passing and
  linear programming approaches.
\newblock IEEE Trans. Inform. Theory \textbf{51} (2005)  3697--3717

\bibitem{Kolmogorov06}
Kolmogorov, V.:
\newblock Convergent tree-reweighted message passing for energy minimization.
\newblock IEEE Trans. Pattern Anal. Machine Intell. \textbf{28} (2006)
  1568--1583

\bibitem{Kasteleyn2}
Kasteleyn, P.:
\newblock Graph theory and crystal physics.
\newblock In Frank Harary, editor, Graph Theory and Theoretical Physics (1967)
  43--110

\bibitem{Fisher2}
Fisher, M.:
\newblock On the dimer solution of planar {I}sing models.
\newblock \textbf{7(10):1776-1781} (1966)

\bibitem{Globerson}
Globerson, A., Jaakkola, T.:
\newblock Approximate inference using planar graph decomposition.
\newblock In: NIPS. (2007)  473--480

\bibitem{SchraudolphNIPS}
Schraudolph, N., Kamenetsky, D.:
\newblock Efficient exact inference in planar {I}sing models.
\newblock In: NIPS. (2009)

\bibitem{RPM}
Schraudolph, N.:
\newblock Polynomial-time exact inference in np-hard binary {MRF}s via
  reweighted perfect matching.
\newblock In: AISTATS. (2010)

\bibitem{Batra}
Batra, D., Gallagher, A., Parikh, D., Chen, T.:
\newblock Beyond trees: {MRF} inference via outer-planar decomposition.
\newblock In: CVPR. (2010)

\bibitem{Yarkony11a}
Yarkony, J., Ihler, A., Fowlkes, C.:
\newblock Planar cycle covering graphs.
\newblock In: UAI. (2011)

\bibitem{Komodakis07}
Komodakis, N., Paragios, N., Tziritas, G.:
\newblock {MRF} optimization via dual decomposition: Message-passing revisited.
\newblock In: ICCV, Rio de Janeiro, Brazil (2007)

\bibitem{Deza}
Deza, M., Laurent, M.:
\newblock {Geometry of cuts and metrics}.
\newblock Springer-Verlag (1997)

\bibitem{Barahona83}
Barahona, F.:
\newblock The max cut problem in graphs not contractible to $k_5$.
\newblock Operations Research Letters \textbf{2} (1983)  107--111

\bibitem{MFTM01}
Martin, D., Fowlkes, C., Tal, D., Malik, J.:
\newblock A database of human segmented natural images and its application to
  evaluating segmentation algorithms and measuring ecological statistics.
\newblock ICCV (2001)

\bibitem{Arbelaez11}
Arbelaez, P., Maire, M., Fowlkes, C., Malik, J.:
\newblock Contour detection and hierarchical image segmentation.
\newblock IEEE TPAMI \textbf{33} (2011)  898--916

\bibitem{Kolmogorov09}
Kolmogorov, V.:
\newblock Blossom {V}: A new implementation of a minimum cost perfect matching
  algorithm.
\newblock Mathematical Programming Computation \textbf{1(1)} (2009)  43--67

\bibitem{SchraudolphTR}
Schraudolph, N., Kamenetsky, D.:
\newblock Efficient exact inference in planar {I}sing models.
\newblock Technical Report 0810.4401 (2008)

\bibitem{Sontag08}
Sontag, D., Meltzer, T., Globerson, A., Weiss, Y., Jaakkola, T.:
\newblock Tightening {LP} relaxations for {MAP} using message passing.
\newblock In: UAI. (2008)

\bibitem{Sontag07}
Sontag, D., Jaakkola, T.:
\newblock New outer bounds on the marginal polytope.
\newblock In: NIPS. (2007)

\bibitem{BarahonaMahjoub86}
Barahona, F., Mahjoub, A.:
\newblock On the cut polytope.
\newblock Mathematcial Programming \textbf{36} (1986)  157--173

\bibitem{YarkonyThesis}
Yarkony, J.:
\newblock {Planarity Matters: MAP inference in Planar Markov Random Fields with
  Applications to Computer Vision}.
\newblock University of California, Irvine (2012)

\end{thebibliography}
\newpage
\section*{Appendix A: Derivation of the LP dual}
It is informative to analyze the dual LP to the bound optimization presented in
Equation (14) of the paper:
\begin{align}
&\max_{\lambda} \sum_e (\theta_e - \lambda_e) \\
&\text{s.t.} \quad \theta_e \leq \lambda_e \leq \max\{0,\theta_e\} \notag \\
&\quad \quad \sum_{e} \lambda_e X_e \geq 0 \quad \quad \forall X \in \mathcal{C}_2 \notag
\end{align}
In order to write this in a standard form, define
\[
  \tilde{\lambda_e} = \lambda_e - \theta_e
\]
and
\[
  \tilde{\theta_e} = \max(0,\theta_e)-\theta_e = -\min(0,\theta_e)
\]
Let $C$ be a matrix whose rows are the collection of cut indicator
vectors $X$. Let us assume for now that $C$ contains the entire set of cut
vectors.  We can write the LP in standard form:
\begin{align}
&\max_{\lambda} -1^T \tilde{\lambda} \\
&\text{s.t.} \quad \tilde{\lambda} \geq 0 \notag\\
&\quad \quad \tilde{\lambda} \leq \tilde{\theta} \notag \\
&\quad \quad -C \tilde{\lambda} \leq C \theta \notag 
\end{align}
which has the following dual LP:
\begin{align}
&\min_{\alpha,\beta} \theta^T C^T\alpha + \tilde{\theta}^T\beta \\
&\text{s.t.} \quad \alpha \geq 0 \notag\\
&\quad \quad \beta \geq \max(0,C^T\alpha - 1) \notag 
\end{align}
To further simplify the expression, we define the set ${\mathcal C}_2^\triangle =
\{C^T\alpha,
\alpha\geq 0\}$ which is the convex cone known as the ``cut cone''\cite{Deza}.
Since $\tilde{\theta}\geq0$, $\beta$ will always take on its minimum allowable value
so we can collapse it into the objective, yielding:
\begin{align}
&\min_{z} \theta^T z - \min(\theta^T,0)\max(z-1,0)\\
&\text{s.t.} \quad z \in {\mathcal C}_2^\triangle \notag
\end{align}
Observe that the second term in the objective is $0$ when $z\leq1$ and is
positive when $z>1$.  

If we drop the upper-bound constraints on $\lambda$ in Equation (14) in the main
paper, we get the simplified dual LP:
\begin{align}
&\min_{z} \theta^T z \\
&\text{s.t.} \quad z \in {\mathcal C}_2^\triangle \notag\\
&\quad \quad z \leq 1 \notag
\end{align}

\noindent As discussed in the paper, this analysis provides several
insights on the nature of the lower-bound:

\begin{enumerate}
\item{One can see that geometrically, the PlanarCC bound is equivalent to
optimizing over a relaxation of the multi-cut polytope given by the intersection
of the cut cone and the unit hypercube.  This seems natural since the cut cone
and the multi-cut cone coincide~\cite{Deza} and the cut cone can be compactly
described for planar graphs.}\\

\item{Intuitively, the upper-bound constraints on $\lambda$ in the original LP
are irrelevant to the final value of the LP in the case that all cuts are
included in $C$. This is because any multicut can be represented as a sum of
isolating cuts, each with weight $0.5$. If such a solution optimizes LP (5), it
is also optimal for LP (4). We give a detailed proof in the next section which 
holds even when $C$ only includes a subset of cuts.}\\

\item{Finally, one can see the relation between this bound and the standard
cutting plane approaches used, for example, in \cite{Sontag07} or the ILP solution of
\cite{Andres11}.  In a standard cutting-plane approach, one optimizes the LP
relaxation $\min \theta^T z$ with a subset of the constraints that define the
multi-cut polytope and then successively add constraints, carving away parts of
the search space until an integral solution is found.  In contrast, each time
we add a cutting plane to our primal LP, this adds another row to the matrix C
in the dual LP which expands the set of allowable solutions $z$. Thus our
algorithm can be viewed as a delayed column generation scheme for the dual LP
in which we keep growing the space of reachable $z$ until an optimum is found.}
\end{enumerate}

\section*{Appendix B: Additional constraints on $\lambda$ don't affect the bound}

In Section 5 we introduced the constraint $\lambda_e \leq \max \{
0,\theta_e\}$ without a formal justification.  Here we show this constraint
does not decrease the lower-bound.  Suppose we first optimize the lower-bound
without including the constraint $\lambda_e \leq \max \{ 0,\theta_e\}$. Let
$\lambda^*$ denote the optimizing parameters.  Our strategy is to show that
$\lambda^*$ can be modified to satisfy $\lambda_e \leq \max \{ 0,\theta_e\}  \;
\forall e$ without loosening the bound.  

We first restate the definition of the lower-bound:
\begin{align}
\label{equallist}
CC^* \geq \sum_{e}(\theta_e-\lambda^*_e)+\min_{X\in \mathcal{C}_2}\sum_e \lambda^*_e X_e
\end{align}
The convex hull of the set of cut indicator vectors is known as the cut
polytope~\cite{Deza} which we denote $\mathcal{C}^\square_2$.  For planar
graphs, this polytope is compactly described by the set of cycle inequalities
\cite{BarahonaMahjoub86}.  One can
relax the discrete optimization over $\mathcal{C}_2$ to an LP over the cut
polytope (see e.g., \cite{Sontag07}).  In the following analysis, we work with
the dual of this relaxed LP in which we have a collection of dual variables
$\{\phi_e^c\}$ corresponding to the constraint on each cycle $c$ of the planar
graph~\cite{YarkonyThesis}.  
\begin{align}
\min_{X\in \mathcal{C}^\square_2}\sum_e \lambda^*_e X_e
&=\max_{\phi: \sum_c \phi^c_e=\lambda^*_e}\sum_c \min_{X^c \in \mathcal{C}_2}\sum_e \phi^c_e X^c_e
\end{align}
The right-hand side corresponds to dual-decomposition into a
collection of subproblems, each of which is a cycle from the original graph.
Let $\phi^\star$ denote an optimizer of this dual LP.  We write our
lower-bound in terms of this cycle decomposition as: 
\begin{align}
\label{cyclereplace}
CC^*\geq \sum_e( \theta_e-\lambda^*_e) +\sum_c \min_{X^c\in \mathcal{C}_2}\sum_e \phi^{*c}_{e}X^c_e
\end{align}

\noindent\textbf{Lemma 1:} For all cycles $c$, $\min_{X^c\in \mathcal{C}_2}\sum_e \phi^{*c}_{e}X^c_e=0$.\\

Lemma 1 holds because the minimum energy of each subproblem for a cycle $c$ is
upper-bounded by zero and the sum of all the cycle subproblem energies is zero
due to the constraint $\lambda^\star \in \Omega$.  Notice that if there is a negative
valued parameter $\phi^{*c}_e$ on the $c$'th cycle sub-problem then any other
edge $f\neq e$ must have a parameter setting such that
$\phi^{*c}_e+\phi^{*c}_f\geq 0$.  Otherwise, the configuration which cuts $e$
and $f$ would have negative energy. In particular, this implies that each
cycle subproblem can only have a single negative parameter.\\

\noindent \textbf{Lemma 2:  } For each edge $e$ with $(\theta_e-\lambda^*_e<0)$
contained in a cycle $c$, either ($\phi^{*c}_e<0$) or ($\exists f \; \;
\mbox{s.t.} \; \; \phi^{*c}_e+\phi^{*c}_f=0$). \\

Suppose there existed an edge $e$ and cycle $c$ for which the implication of the
lemma is false, that is ($\theta_e-\lambda^*_e<0$), ($\phi^{*c}_e\geq0$) and
($\forall f : \phi^{*c}_e+\phi^{*c}_f >0$).  This would mean there is no
minimizing configuration of the cycle subproblem that includes edge $e$ (such a
cut would have positive weight). However, edge $e$ is necessarily cut in the
single edge problem. In such a case the lower-bound could be tightened by the
following update:
\begin{align}
& f=\mbox{arg} \min_{f\neq e}(\phi^{*c}_e+\phi^{*c}_f).\\
\nonumber & V=\min[-(\theta_e-\lambda^*_e),\phi^{*c}_e+\phi^{*c}_{f}]\\
\nonumber & \lambda^*_e\Leftarrow \lambda^*_e- V\\
\nonumber & \phi^{*c}_e \Leftarrow \phi^{*c}_e-V
\end{align}
This update would drive up the energy of the single edge problem thus
increasing the lower-bound by a positive quantity $V$. Since the lower-bound
is tight by assumption, such an edge $e$ and cycle $c$ must not exist.\\

\noindent \textbf{Modifying $\lambda$ to satisfy the constraint:}
We now describe an iterative procedure that starts with $\lambda^*$ and $\phi^*$ and
produces a modified $\lambda^+$and $\phi^{+}$ obeying the constraint:
\begin{align}
\label{altparamconst}
(\theta_e-\lambda^+_e<0) \rightarrow (\phi^{+c}_e\leq 0) \quad \forall[c,e]
\end{align}
The lower-bound corresponding to $\lambda^+$ will have the same value as that
of $\lambda^*$ and will satisfy the additional upper-bound constraint as desired.

For each $(e,c)$ such that ($\theta_e-\lambda^*_e<0$) and ($\phi^{*c}_e> 0$),
Lemma 2 establishes that there exists an edge $f$ so that
$(\phi^{*c}_e+\phi^{*c}_f=0$).  Choose one such edge $f$ and apply the
parameter updates:
\begin{align}
\label{modparams2}
& V\Leftarrow \max[\theta_e-\lambda^*_e,-\phi^{*c}_{e}]\\
\nonumber & \phi^{*c}_{e} \Leftarrow \phi^{*c}_{e}+V\\
\nonumber & \phi^{*c}_f \Leftarrow \phi^{*c}_{f}-V\\
\nonumber & \lambda^*_e \Leftarrow \lambda^*_e+V \\
\nonumber & \lambda^*_f \Leftarrow \lambda^*_f-V
\end{align}
Repeatedly apply these updates until there exist no $(e,c)$ such that
($\theta_e-\lambda^*_e<0$) and ($\phi^{*c}_e> 0$).  These updates do not change
the minimizing configuration or energy of either the cycle or edge subproblems.
They also respect the lower-bound constraint $\theta_e \leq \lambda_e$.  Thus
the bound remains constant.  The final results of this procedure are
denoted $\lambda^+$ and $\phi^{+}$. \\

\noindent \textbf{Lemma 3:} For all edges $e$, $(\theta_e-\lambda^+_e<0
)\rightarrow ( \lambda^+_e \leq 0)$\\ 

The algorithm terminates when $(\theta_e-\lambda^+_e<0) \rightarrow
(\phi^{+c}_e\leq 0)$ for all cycles $c$ and edges $e$.  Since $\phi^+$ is a
reparameterization of $\lambda^+$ we have that $\sum_c \phi^{+c}_e=
\lambda^+_e$ for each edge $e$ which establishes the lemma.\\

\noindent \textbf{Claim}: For all edges $e$, $\lambda^+_e \leq \max \{ 0,\theta_e\}$ \\

If ($\theta_e-\lambda^+_e=0$) then the claim is satisfied.  If
$(\theta_e-\lambda^+_e<0)$ then by Lemma 3 we have ($\lambda^+_e \leq 0$).
For such an edge, it must be that $\theta_e < 0$ as we can't simultaneously
have $(\theta_e-\lambda^+_e<0)$, $\lambda^+_e \leq 0$ and $\theta_e \geq 0$.
Thus, we can transform any optimizer $\lambda^\star$ into an optimizer
$\lambda^+$ that achieves the same lower-bound and satisfies the additional
constraints.
\end{document}